




\documentclass{ecai} 



\usepackage{latexsym}
\usepackage{amssymb}
\usepackage{amsmath}
\usepackage{amsthm}
\usepackage{booktabs}
\usepackage{enumitem}
\usepackage{graphicx}
\usepackage{color}
\usepackage{booktabs}						
\usepackage{multirow}						
\usepackage{amsfonts}						
\usepackage{duckuments}					
\usepackage{tabularx}
\usepackage{natbib}

\usepackage{amsmath,amsfonts,bm}
\usepackage{xcolor}
\usepackage{fontawesome}









\def\eqref#1{(\ref{#1})}









\def\1{\bm{1}}







\def\vzero{{\bm{0}}}
\def\vone{{\bm{1}}}

\def\vtheta{{\bm{\theta}}}

\def\vphi{{\bm{\phi}}}
\def\vpsi{{\bm{\psi}}}

\def\vx{{\mathbf{x}}}
\def\vy{{\mathbf{y}}}
\def\vz{{\mathbf{z}}}


\def\mA{{\mathbf{A}}}

\def\mI{{\mathbf{I}}}

\def\mS{{\mathbf{S}}}

\def\mX{{\mathbf{X}}}

\def\mZ{{\mathbf{Z}}}

\DeclareMathAlphabet{\mathsfit}{\encodingdefault}{\sfdefault}{m}{sl}
\SetMathAlphabet{\mathsfit}{bold}{\encodingdefault}{\sfdefault}{bx}{n}




\def\sP{{\mathbb{P}}}








\newcommand{\R}{\mathbb{R}}




\newcommand{\abs}[1]{\left\lvert #1 \right\rvert}

\newcommand{\norm}[1]{\left\lVert#1\right\rVert}
\newcommand{\T}{\top}

\usepackage[textsize=tiny, backgroundcolor=blue!30]{todonotes}
\usepackage{listings}
\usepackage{color}

\usepackage{algorithm,algorithmicx,algpseudocode}

\usepackage{subcaption}


\newtheorem{theorem}{Theorem}
\newtheorem{lemma}[theorem]{Lemma}



\newcommand{\BibTeX}{B\kern-.05em{\sc i\kern-.025em b}\kern-.08em\TeX}


\begin{document}


\begin{frontmatter}


\paperid{0046} 


\title{Enhancing Fairness in Autoencoders for Node-Level Graph Anomaly Detection}


\author[A]{\fnms{Shouju}~\snm{Wang}\footnote{Equal contribution.}}
\author[B]{\fnms{Yuchen}~\snm{Song}\footnotemark}
\author[B]{\fnms{Sheng'en}~\snm{Li}\footnotemark} 
\author[B]{\fnms{Dongmian}~\snm{Zou}\thanks{Corresponding Author. Email: dongmian.zou@duke.edu}}

\address[A]{Wuhan University, China}
\address[B]{Duke Kunshan University, China}


\begin{abstract}
Graph anomaly detection (GAD) has become an increasingly important task across various domains. With the rapid development of graph neural networks (GNNs), GAD methods have achieved significant performance improvements. However, fairness considerations in GAD remain largely underexplored. Indeed, GNN-based GAD models can inherit and amplify biases present in training data, potentially leading to unfair outcomes. While existing efforts have focused on developing fair GNNs, most approaches target node classification tasks, where models often rely on simple layer architectures rather than autoencoder-based structures, which are the most widely used architecturs for anomaly detection. To address fairness in autoencoder-based GAD models, we propose \textbf{D}is\textbf{E}ntangled \textbf{C}ounterfactual \textbf{A}dversarial \textbf{F}air (DECAF)-GAD, a framework that alleviates bias while preserving GAD performance. Specifically, we introduce a structural causal model (SCM) to disentangle sensitive attributes from learned representations. Based on this causal framework, we formulate a specialized autoencoder architecture along with a fairness-guided loss function. Through extensive experiments on both synthetic and real-world datasets, we demonstrate that DECAF-GAD not only achieves competitive anomaly detection performance but also significantly enhances fairness metrics compared to baseline GAD methods. Our code is available at \url{https://github.com/Tlhey/decaf_code}.
\end{abstract}

\end{frontmatter}


\section{Introduction}

Graph-structured data has gained significant attention due to its unique ability to represent complex relationships across various domains, such as social networks~\cite{traud2012social}, financial systems~\cite{Wang2021ARO}, and recommendation systems~\cite{Fan2019GraphNN}. 
In these domains, graph anomaly detection (GAD) stands out as an important task~\cite{gad}, as anomalies often signal critical irregularities such as fake accounts, fraudulent transactions, or manipulated user interactions. This paper focuses on node-level GAD, which aims to identify anomalous nodes based on their features and connectivity patterns. Node-level anomalies are particularly challenging due to the inherent class imbalance, where anomalous nodes are significantly outnumbered by normal nodes.

Various approaches have been developed to tackle node-level GAD, such as methods based on node representation learning, contrastive learning, and adversarial learning~\cite{10.1145/3570906}.
Among these, node representation learning has become particularly popular due to its ability to learn informative embeddings and detect anomalies based on reconstruction error~\cite{gad}. 
This approach is especially effective in the unsupervised setting, where only the input graph is available and no labels are provided during training. 
A typical mechanism of learning node representations involves employing an autoencoder that embeds the input graph data into a latent space and subsequently reconstructing it. Anomalies are identified based on  reconstruction error, where poorly reconstructed nodes are flagged as potential anomalies. 

Despite their effectiveness, autoencoder-based models, particularly those leveraging graph neural networks (GNNs), raise significant fairness concerns.
Indeed, real-world graph data often contain inherent attribute and structural biases~\cite{22GEAR}. Attribute bias arises when demographic groups exhibit different statistical distributions~\cite{agarwal2021towards}, while structural bias occurs when the propagation of attribute values reinforces disparities among groups~\cite{Agarwal2021ProbingGE}. For example, if certain communities in a social network are underrepresented in the training data, a GAD model might disproportionately classify members of these groups as anomalies, leading to unfair outcomes. A key contributor to this issue is representation disparity~\cite{Hashimoto2018FairnessWD}, where minority groups contribute less to the learning objective due to their under-representation in training data. As a result, learned representations fail to adequately capture their characteristics, which leads to lower detection accuracy for these minority groups. The problem is further exacerbated in unsupervised GAD, where the scarcity of labeled anomalies makes bias mitigation even more challenging.

Fairness in GAD has only recently emerged as a  research focus. \citet{Neo2024TowardsFG} provided the first formal study on this problem by benchmarking existing GAD methods and highlighting the need for further investigation into fairness considerations. More recently, \citet{24defend} introduced the first dedicated methodology aimed at addressing fairness in GAD by employing causal disentanglement, a fundamental technique commonly used in fair graph learning frameworks~\cite{2019flexibly, yang2020causalvae, zhu2024fair}. This approach learns two separate node embeddings: one that excludes sensitive attributes and another that retains them, with the expectation that predictions based on the non-sensitive embedding will lead to fairer outcomes. However, their approach lacks explicit causal modeling and does not formally capture the dependencies between sensitive and non-sensitive features.
Structural causal models (SCMs)~\cite{10.5555/1642718}, which are widely used in fair graph learning, provide a principled way to model these relationships. However, a major challenge in fair GAD arises as existing SCM-based methods do not naturally extend to autoencoder-based GAD. In most fairness-aware graph learning settings, autoencoders are used to generate counterfactual examples, which focuses on altering sensitive attributes to study their effects. This design contrasts with the objective in GAD, where the primary goal is to reconstruct normal samples accurately and use reconstruction error to detect anomalies. As a result, these counterfactual-oriented autoencoders are not suitable for GAD tasks.

To address the above issues, we introduce an SCM specifically designed for analyzing autoencoders in the context of GAD. Building on this causal framework, we further propose a novel autoencoder-based architecture that enhances fairness in anomaly detection via disentanglement. We call our model DECAF-GAD, short for \textbf{D}is\textbf{E}ntangled \textbf{C}ounterfactual \textbf{A}dversarial \textbf{F}air Graph Anomaly Detection. Our approach is designed to be plug-and-play, which means that it can be integrated into existing autoencoder-based GAD methods. It incorporates a loss function that balances anomaly detection performance with counterfactual fairness. This loss function integrates disentanglement, reconstruction, and counterfactual components. We summarize our main contributions as follows: 
\begin{itemize} 
\item We present a causal analysis for fair GAD by formulating a structural causal model (SCM) for bias propagation in autoencoders. Based on this analysis, we design a fairness-aware loss function that promotes fairness by disentangling sensitive attributes from learned representations.
\item We propose a plug-and-play, model-agnostic framework, that can be easily integrated into autoencoder-based GAD methods for enhanced fairness. 
\item We conduct extensive experiments on both a synthetic dataset and real-world datasets to demonstrate the effectiveness of DECAF-GAD over various baseline GAD models, which shows that our method consistently improves fairness metrics while maintaining competitive anomaly detection performance.
\end{itemize}


\section{Related Works}
\paragraph{Graph Anomaly Detection (GAD)}

Early approaches to GAD predominantly relied on handcrafted feature engineering or statistical models requiring substantial domain expertise~\cite{valko2008distance,ng2019graph}. While methods like OddBall~\cite{akoglu2010oddball} and Radar~\cite{li2017radar} demonstrated effectiveness in specific scenarios through distribution pattern analysis and residual error computation, they faced limitations in handling complex graph structures and lacked adaptability to diverse real-world applications. With the rise of deep learning, attention shifted toward unsupervised graph autoencoders, which automatically learn low-dimensional node representations and detect anomalies via reconstruction errors~\cite{Sakurada2014AnomalyDU,19DOMINANT}. For instance, DOMINANT~\cite{19DOMINANT} pioneered the integration of graph convolutional networks (GCNs)~\cite{Kipf:2017tc} with deep autoencoders to jointly reconstruct node attributes and graph structure. Subsequent innovations extended this framework through specialized reconstruction mechanisms: DONE~\cite{20AdONE} introduced dual autoencoders for separate structure-attribute reconstruction with adversarial regularization; CoLA~\cite{22Cola} proposed contrastive learning between nodes and their neighborhood substructures for anomaly-aware training; CONAD~\cite{22CONAD} integrated human knowledge through contrastive data augmentation and Siamese graph encoders; GAD-NR~\cite{24GADNR} advanced neighborhood reconstruction encompassing local structures and neighbor attributes for detecting non-cluster anomalies. However, these methods primarily focus on detection accuracy while neglecting potential biases in anomaly scoring, particularly when sensitive attributes correlate with graph structure~\cite{He2023ADAGADAA,2024mirror}.

\paragraph{Fairness in Deep Graph Learning}
Fairness in graph-based machine learning has been widely studied under different fairness notions, including group fairness~\cite{Hardt2016EqualityOO, Zemel2013LearningFR}, individual fairness~\cite{Dwork2011FairnessTA}, counterfactual fairness~\cite{guo2023counterfactual, 2017counterfactual}, and other task-specific fairness notions~\cite{dong2023fairness}.  To tackle unfairness, existing methods generally follow three main strategies \cite{dong2023fairness}: regularization approaches introduce fairness constraints or penalty terms into the training objective to reduce correlations between model predictions and sensitive attributes~\cite{fan2021fair, 2022EDITS}; rebalancing methods focus on adjusting the distribution of the training data~\cite{rahman2019fairwalk} or modifying the graph structure ~\cite{kose2022fair} in order to reduce bias; adversarial learning frameworks encourage a model to learn representations invariant to sensitive attributes by deceiving a discriminator network~\cite{bose2019compositional}. Prominent fair node classification frameworks such as FairGNN \cite{2021FairGNN} combine adversarial debiasing with sensitive-attribute estimators and covariance constraints to achieve unbiased node-level decisions. DegFairGCN \cite{liu2023generalized} introduces a trainable debiasing function that adapts the neighborhood aggregation process at each GNN layer, mitigating potential bias from degree imbalances. While research on fair graph learning focused primarily on node classification and link prediction, recent studies have extended fairness considerations to more diverse tasks such as graph clustering\cite{gkartzios2025fair} and recommendation systems\cite{wu2021learning}. However, fairness in GAD remains an underexplored area, with only a few recent works tackling this problem~\cite{Neo2024TowardsFG,24defend}. Our work differs from these by introducing a generalized SCM specifically designed for autoencoder-based GAD methods.

\paragraph{Casual Disentangle Representation Learning}
Causal analysis plays a critical role in fair learning, providing a structured framework to model causal relationships between variables using SCMs. By explicitly modeling how sensitive attributes influence predictions, SCMs help in identifying and mitigating bias~\cite{22GEAR, 23CAF}. A widely adopted technique in fairness-aware learning is disentangled representation learning, which aims to separate sensitive and non-sensitive components in embeddings. This approach is widely integrated with (variational) autoencoders to enforce separation of latent factors. In graph-based scenarios, causal disentanglement is particularly useful for generating graph counterfactuals. Several recent works leverage causal disentanglement and counterfactual reasoning to enhance fairness in graph learning. NIFTY \cite{agarwal2021towards} introduces fairness by randomly perturbing node attributes and graph structures with layer-wise normalization, aiming to reduce bias propagation in GNNs. GEAR \cite{22GEAR} extends this idea by generating counterfactual subgraphs using GraphVAE and training a Siamese network, but it overlooks the causal links between attributes, which may lead to suboptimal counterfactual quality. CAF \cite{23CAF} improves counterfactual generation by selectively constructing latent counterfactual representations, ensuring that anomaly rarity is preserved and avoiding unrealistic graph modifications. For fair representation learning, several methods introduce disentanglement within graph learning models: CFAD \cite{2023CFAD} employs adversarial autoencoders to separate sensitive attributes from learned embeddings. FairSAD \cite{zhu2024fair} introduces channel masking to minimize the impact of sensitive attributes on model predictions. Recent works \cite{wang2024advancing, Zhang2025Disentangled} formulate supervised counterfactual fair embedding methods. Unlike these methods, our work explicitly models causal relationships for autoencoder-based GAD, leading to a different SCM and consequently a different graph learning framework.

\section{Methodology}
\subsection{Structural causal model for autoencoders}
The primary goal of DECAF-GAD is to block the influence of sensitive attributes in the anomaly detection process. 
To formally represent the relationships among these variables, we employ the SCM to analyse the causal relations.

Fairness analysis in graph learning is often based on the SCM as shown in the left subfigure of Figure~\ref{fig:SCM}. In this conventional SCM, the input graph is denoted by $G$, the sensitive attribute by $S$, and the output by either $G'$ for link prediction tasks or $Y$ for node classification tasks. However, this formulation is too simple to work for autoencoder-based GAD, where latent representations play a crucial role in anomaly detection. To address this limitation, we propose an alternative SCM, illustrated in the right subfigure of Figure~\ref{fig:SCM}. 

\begin{figure}[h]
    \centering
    \includegraphics[width=0.8\linewidth]{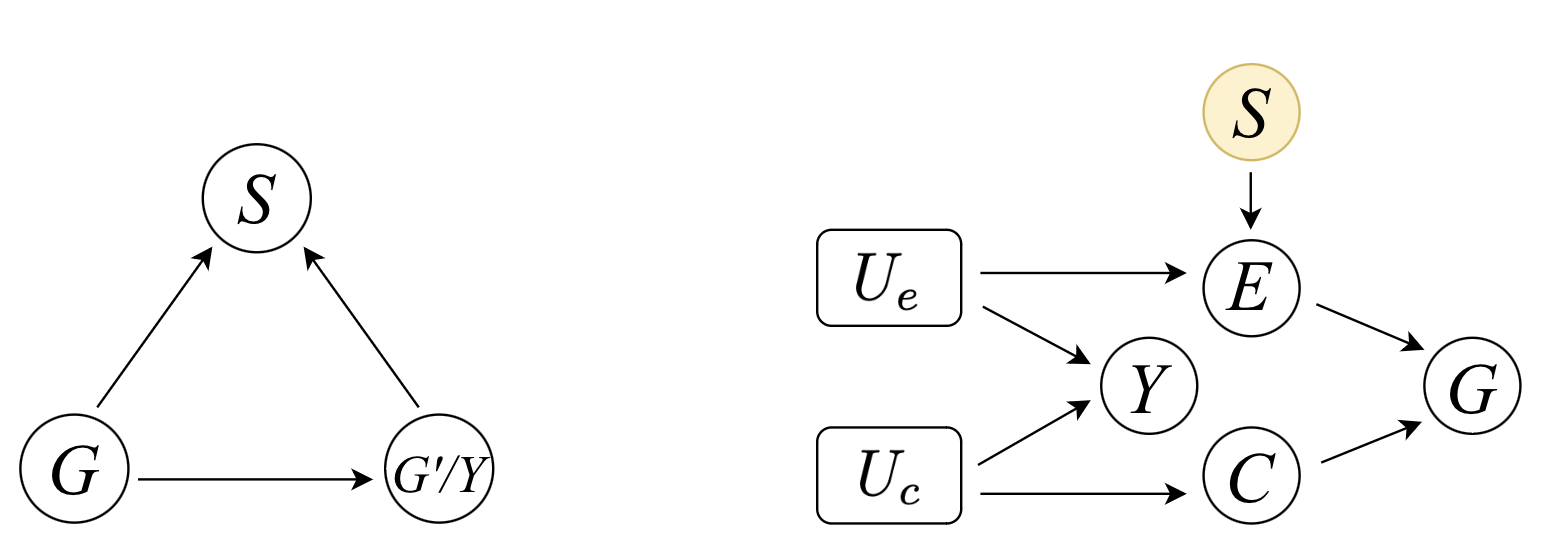}
    \vskip 1em
    \caption{Common SCM for fair learning on graphs (left) and our SCM for fair GAD (right).}
    \vskip 2em
    \label{fig:SCM}
\end{figure}

A key feature of our proposed SCM is the introduction of a content variable $C$ and an environment variable $E$, which together provide a disentangled representation of the graph $G$. Specifically, $C$ captures the content of the graph while excluding sensitive attributes, and $E$ encapsulates the residual information which include sensitive features. 
Furthermore, we introduce latent variables $U_\text{c}$ and $U_\text{e}$, which correspond to latent node representations of the autoencoder for the graph $G$. Both $U_\text{c}$ and $U_\text{e}$ are not directly influenced by $S$. Lastly, $Y$ in this SCM represents the binary label of normal/anomalous for our anomaly detection task. $Y$ is directly influenced by $U_\text{c}$ and $U_\text{e}$ since autoencoder-based GAD methods usually rely on reconstruction from latent node representations.

We first show in the following lemma that $Y$ is independent from $S$ conditioned on $U_\text{c}$ and $U_\text{e}$. Consequently, given the latent node representations of an autoencoder, we can predict $Y$ in a fair manner, without incorporating the sensitive attributes $S$. 

\begin{lemma}[Conditional independence given latent representations]
\label{lemma:cond_independence}
Under the SCM illustrated in the right subfigure of Figure~\ref{fig:SCM}, 
$Y$ is conditionally independent of $S$ given $U_\text{c}$ and $U_\text{e}$, namely:
\[
Y \perp\!\!\!\perp S \mid U_\text{c}, U_\text{e}.
\]
\end{lemma}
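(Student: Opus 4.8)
The plan is to establish the claim purely from the graphical structure of the proposed SCM, via the soundness of $d$-separation (the global Markov property): in any SCM whose causal graph is a DAG, if a set $Z$ of variables $d$-separates $A$ from $B$, then $A \perp\!\!\!\perp B \mid Z$. Thus it suffices to show that $\{U_\text{c}, U_\text{e}\}$ $d$-separates $S$ from $Y$ in the right-hand graph of Figure~\ref{fig:SCM}. I will not need the detailed wiring among $S$, $G$, $C$, and $E$; the argument relies only on three structural facts read off the figure: (i) the parents of $Y$ are exactly $U_\text{c}$ and $U_\text{e}$; (ii) $S$ is not a parent of $Y$ (equivalently, there is no direct edge $S \to Y$); and (iii) $Y$ is a sink, i.e.\ it has no outgoing edges.

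The core of the argument is a short case analysis on the edge of each path incident to $Y$. First I would fix an arbitrary path $\pi$ between $S$ and $Y$ in the DAG and examine its last node before $Y$, say $W$. By (iii) the final edge of $\pi$ must point into $Y$, so it has the form $W \to Y$; by (i) this forces $W \in \{U_\text{c}, U_\text{e}\}$. Next I would observe that $W$ cannot be a collider on $\pi$: a collider at $W$ would require both of its path-edges to point into $W$, but the edge $W \to Y$ points away from $W$. Hence $W$ is a non-collider (a chain or a fork) lying in the conditioning set $\{U_\text{c}, U_\text{e}\}$, which is precisely the condition under which $\pi$ is blocked. Since $\pi$ was arbitrary, every path from $S$ to $Y$ is blocked by $\{U_\text{c}, U_\text{e}\}$, yielding the desired $d$-separation and therefore $Y \perp\!\!\!\perp S \mid U_\text{c}, U_\text{e}$.

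The main subtlety — and the step I would write out most carefully — is the treatment of colliders appearing elsewhere along $\pi$. Conditioning on a variable can \emph{open} a path by activating a collider whose descendant is in the conditioning set, so one must verify this cannot resurrect a blocked path. The argument above sidesteps this entirely: blocking a path requires only a \emph{single} blocking node, and the non-collider $W \in \{U_\text{c}, U_\text{e}\}$ adjacent to $Y$ supplies one on every path, independently of how many colliders lie nearer to $S$. This is also why the conclusion is robust to the precise dependencies among $S$, $C$, and $E$ (for instance $S \to E$ with $E \to U_\text{e}$, or any common-cause structure linking $C$ and $E$): however $S$ reaches the neighbourhood of $Y$, it can only enter $Y$ through the observed $U_\text{c}$ or $U_\text{e}$. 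A final check I would include is that $U_\text{c}$ and $U_\text{e}$ are not themselves direct effects of $S$ (as stated in the text), so that the relevant incoming edges are $C \to U_\text{c}$ and $E \to U_\text{e}$ rather than $S \to U_\text{c}$ or $S \to U_\text{e}$; this guarantees that facts (i) and (ii) hold as read off the figure.
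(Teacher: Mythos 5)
Your proof is correct, and it reaches the paper's conclusion by a genuinely different route within the d-separation framework. The paper's own proof enumerates the paths between $S$ and $Y$ --- there are exactly two in this graph, $S \to E \leftarrow U_\text{e} \to Y$ and $S \to E \to G \leftarrow C \leftarrow U_\text{c} \to Y$ --- and observes that each contains a fork ($U_\text{e}$ and $U_\text{c}$, respectively) lying in the conditioning set, hence is blocked. You instead argue uniformly over all paths: since $Y$ is a sink whose parents are exactly $U_\text{c}$ and $U_\text{e}$, every path must terminate in an edge $W \to Y$ with $W \in \{U_\text{c}, U_\text{e}\}$, and $W$ cannot be a collider on that path, so every path is blocked at its last hop. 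What your approach buys is robustness: it does not require exhaustively listing paths (a completeness step the paper asserts but does not verify), it explicitly dispatches the collider-activation subtlety, and it remains valid under any rewiring of the relations among $S$, $C$, $E$, and $G$ --- indeed even if $S$ pointed directly into $U_\text{c}$ or $U_\text{e}$, since that node would then be a chain node in the conditioning set rather than a collider. What the paper's version buys is concreteness: it exhibits the specific fork structures doing the blocking, which ties the independence claim directly to the causal reading of $U_\text{c}$ and $U_\text{e}$ as common causes.

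One minor slip in your final check: in the paper's SCM the latent representations generate the observables, so the edges are $U_\text{c} \to C$ and $U_\text{e} \to E$, not $C \to U_\text{c}$ and $E \to U_\text{e}$ as you wrote. This does not affect your argument, since your facts (i)--(iii) concern only the edges incident to $Y$ and hold either way; but for the same reason that check does not actually ``guarantee'' facts (i) and (ii), which are read off $Y$'s in-edges alone.
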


\begin{proof}
We prove the lemma by applying the standard d-separation analysis \cite{10.5555/1642718} for SCM. In our SCM, there are two paths connection $S$ and $Y$. The first path is given by $S \to E \leftarrow U_\text{e} \to Y$. Since it has a fork structure at $U_\text{e}$, meeting tail-to-tail, this path is blocked by $U_\text{e}$. The second path is given by $S \to E \to G \leftarrow C \leftarrow U_\text{c} \to Y$. Since it has a fork structure at $U_\text{c}$, meeting tail-to-tail, this path is blocked by $U_\text{c}$. Therefore, $S$ and $Y$ are d-separated by $\{U_\text{c}, U_\text{e}\}$, which indicates that
\[
Y \perp\!\!\!\perp S \mid U_\text{c}, U_\text{e}.
\]

\end{proof}

Finally, again using d-separation analysis, since paths linking $U_\text{c}$ and $U_\text{e}$ are all colliders, meeting head-to-head, we have the unconditional independence relation $U_\text{c} \perp\!\!\!\perp U_\text{e}$. This is a key motivation for disentangling the latent representations of autoencoder.

\subsection{Details of DECAF-GAD}

\begin{figure*}[t]
    \centering
    \includegraphics[width=0.85\textwidth]{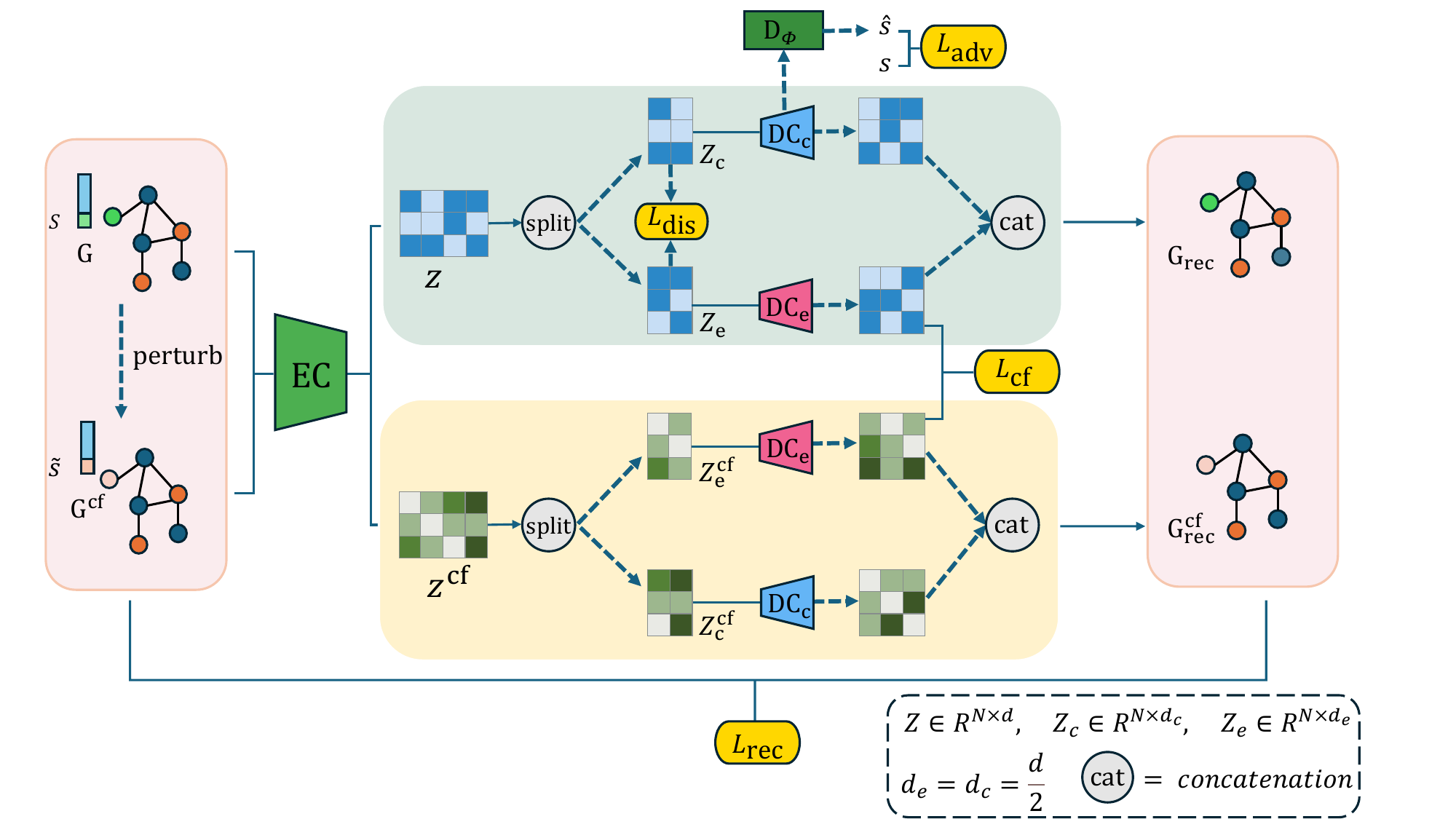}
    \vskip 1emt
    \caption{Architecture of DECAF-GAD. The red shaded region illustrates an autoencoder for the original graph $G$; the blue shaded region illustrates an autoencoder for the counterfactual graph $G^{\text{cf}}$. EC and DC denote the encoder and decoders respectively. This figure also shows the four loss terms $L_\text{rec}$, $L_\text{dis}$, $L_\text{adv}$ and $L_\text{cf}$. For clarify, we omit the structural reconstruction part in our illustration, though our model can accommodate GAD models with such structure.}
    \vskip 2em
    \label{fig:cf_gad_architecture}
\end{figure*}

Building on the above theoretical foundation, we introduce our DECAF-GAD framework. By virtue of the flowchart illustrated in Figure \ref{fig:cf_gad_architecture}, we describe an overview of the model before getting into the details:
Given an input graph $G$, DECAF-GAD generates a counterfactual graph $G^{\text{cf}}$ by flipping the sensitive attributes of $G$. Both $G$ and $G^{\text{cf}}$ are processed by a shared encoder (EC) to obtain latent representations, which are then disentangled into content and environment components, where a disentanglement loss is employed to ensure their separation. More specifically, the latent representations are split into two parts $Z_{\text{e}}$ and $Z_{\text{c}}$, corresponding to $U_\text{e}$ and $U_\text{c}$ in the SCM. On the other hand, by flipping the sensitive variable $S$ to obtain $S^{\text{cf}} = 1 - S$, the encoder produces the latent representations $Z_{\text{e}}^{\text{cf}}$ and $Z_{\text{c}}^{\text{cf}}$ for the counterfactual graph. Adversarial training is employed to enforce the independence of $Z_\text{c}$ from $S$. Separate decoders ($\text{DC}_\text{c}$ and $\text{DC}_\text{e}$) then respectively reconstruct the sensitive and non-sensitive features, producing $\hat{X}_{\text{e}}$, $\hat{X}_{\text{e}}^{\text{cf}}$, $\hat{X}_{\text{c}}$, and $\hat{X}_{\text{c}}^{\text{cf}}$, which are utilized to calculate anomaly scores and enforcing counterfactual constraints. This ensures that $E$ remains consistent across both original and counterfactual graphs, thereby contributing fairly to predictions. Since any autoencoder can be adapted to the architecture of Figure~\ref{fig:cf_gad_architecture}, our DECAF-GAD framework is versatile and can be in integrated with any autoencoder-based GAD method for improving its fairness.

\paragraph{Disentangled representation}
Suppose we are given an attributed graph $G = G(\mX, \mA)$ with $N$ nodes and $D$ features, where $\mX \in \R^{N \times D}$ denotes the feature matrix and $\mA \in \R^{N \times N}$ the adjacency matrix.
To implement the disentangled representation learning approach, let $f_\vtheta$ denote the autoencoder where $\vtheta$ summarizes the parameters.
\begin{equation*}
[ {\mZ_\text{c}}, {\mZ_\text{e}} ] = f_{\vtheta}(\mX, \mA),
\end{equation*}
where $\mZ_\text{c} \in \R^{N \times d_{\text{c}}}$ and $\mZ_\text{e} \in \R^{N \times d_{\text{e}}}$ are the latent representations corresponding to the content and environmental variables. For simplicity we choose $d_{\text{c}} = d_{\text{e}} = d/2$, where $d$ is the latent dimension of the autoencoder.
To ensure effective disentanglement, we need $\mZ_\text{c}$ and $\mZ_\text{e}$ to be dissimilar. We employ a cosine similarity loss due to its simplicity and effectiveness. Namely,
\begin{equation*}
{L}_{\text{dis}}(\vtheta) = \frac{1}{N} \sum_{n=1}^N \abs{\cos(z_\text{c}^n, z_\text{e}^n)},
\end{equation*}
where $z_\text{c}^n$ and $z_\text{e}^n$ are the $n$-th row of $\mZ_\text{c}$ and $\mZ_\text{e}$, respectively; and $\cos(\vx, \vy) := \vx^\T \vy / (\norm{\vx} \norm{\vy})$ for input vectors $\vx$ and $\vy$. 

The latent representations $\mZ_\text{c}$ and $\mZ_\text{e}$ are reconstructed by two separate decoders $g_{\vphi_{\text{c}}}$ and $g_{\vphi_{\text{e}}}$:
\[
\hat{\mX}_\text{c} = g_{\vphi_{\text{c}}}(\mZ_\text{c}); \quad \hat{\mX}_\text{e} = g_{\vphi_{\text{e}}}(\mZ_\text{e}).
\]
The parameters $\vphi_{\text{c}}$ and $g_{\vphi_{\text{e}}}$ are determined by the losses introduced in the subsequent components.

\paragraph{Adversarial learning}
To reinforce the independence of the content representation $\mZ_\text{c}$ from the sensitive attribute $\mS$, we employ an adversarial learning strategy. 
This approach aligns with our SCM in that it prevents the sensitive attribute from influencing the causal pathway $S - G - C - U_{\text{c}} - Y$. 
We introduce an adversarial discriminator, $D_\vpsi$, that attempts to predict the sensitive variable $S$ using $D_\vpsi(\hat{\mX}_\text{c})$. Here, $\vpsi$ summarizes the parameters in the discriminator.
We assume the sensitive variable $S \in \{0,1\}$ is binary as in our datasets. Then, $D_\vpsi$ is trained to maximize the binary cross-entropy loss:
\begin{align*}
L_{\text{adv}}(\vtheta, \vphi_\text{c}, \vpsi) &= - \frac{1}{N} \sum_{n=1}^N \Big( S^n \log D_\vpsi(\vz_{\text{c}}^n) + \\
&\qquad \quad (1-S^n) \log (1-D_\vpsi(\vz_{\text{c}}^n)) \Big),
\end{align*}
where $S^n$ is the sensitive attributes corresponding to the $n$-th node.
Simultaneously, the encoder $f_\vtheta$ is trained to minimize $L_{\text{adv}}$, encouraging the content representation ${\mZ_\text{c}}$ to become invariant to the sensitive attribute $S$. 

\paragraph{Counterfactual regularization}
To explicitly enforce fairness along the path $S - E - U_{\text{e}} - Y$ in our SCM, we introduce a counterfactual regularization term. 
This regularization ensures that the reconstruction remains consistent when sensitive attributes are changed. 
While the adversarial mechanism ensures that the content representation $U_\text{c}$ is fair, the counterfactual promises the quality of $U_\text{e}$ in our model. 

To create a counterfactual graph, we flip the sensitive attribute to produce $\mX^{\text{cf}}$
and then encode this counterfactual graph:
\[
[ {\mZ_\text{c}}^{\text{cf}}, {\mZ_\text{e}^{\text{cf}}} ] = f_{\vtheta}(\mX^{\text{cf}}, \mA),
\]
which is then denoded as
\[
\hat{\mX}_\text{c}^{\text{cf}} = g_{\vphi_{\text{c}}}(\mZ_\text{c}^{\text{cf}}); \quad \hat{\mX}_\text{e}^{\text{cf}} = g_{\vphi_{\text{e}}}(\mZ_\text{e}^{\text{cf}}).
\]
This is in line with the operation of $\text{do}(S = s')$ for the SCM, this operation ensures we are forcibly setting $ S $ to $ s' = 1-s $ while keeping the rest of the node attributes $\mX$ and the structure $\mA$ intact.

To ensure that the latent representation is not affected by sensitive information, we set 
\[
L_\mathrm{cf}(\vtheta, \vphi_\text{e}) = \frac{1}{N} \sum_{n=1}^N \norm{\hat{\mX}_\text{e} - \hat{\mX}_\text{e}^{\text{cf}}}_\text{F}^2,
\]
where $\norm{\cdot}_\text{F}$ denotes the Frobenius norm.

\paragraph{Learning objective}
Suppose an autoencoder-based GAD method has a loss function $L_{\text{rec}}(\vtheta, \vphi_\text{c}, \vphi_\text{e})$, which is commonly taken to be a reconstruction loss. 
The overall learning objective of our DECAF-GAD regularizes this loss function with the aforementioned loss components to balance anomaly detection performance and fairness. More specifically,:
\begin{align*}
L_{\text{total}}(\vtheta, \vphi_\text{c}, \vphi_\text{e}, \vpsi) &=  \lambda_1 L_{\text{rec}}(\vtheta, \vphi_\text{c}, \vphi_\text{e}) + \\
&\quad \lambda_2  L_{\text{dis}}(\vtheta) + \lambda_3  L_{\text{adv}}(\vtheta, \vphi_\text{c}, \vpsi) + \lambda_4  L_{\text{cf}}(\vtheta, \vphi_\text{e}),
\end{align*}
where $\lambda_1$, $\lambda_2$, $\lambda_3$, $\lambda_4$ are hyperparameters. The learning objective is accordingly given by
\begin{equation*}
    \min_{\vtheta, \vphi_\text{c}, \vphi_\text{e}} \max_{\vpsi} ~ L_{\text{total}}(\vtheta, \vphi_\text{c}, \vphi_\text{e}, \vpsi).
\end{equation*}

\paragraph{Complexity Analysis}
We analyze the time complexity of DECAF-GAD. For baseline autoencoder-based GAD models, the computational cost primarily arises from three modules: graph encoding via a GNN, node attribute reconstruction, and graph structure reconstruction. The overall training complexity is 
$O(MDd + NDd + M) = O((M+N)Dd)$,
where \(N\) is the number of nodes, \(M\) is the number of edges, \(D\) is the input feature dimension, and \(d\) is the hidden dimension.

DECAF-GAD introduces additional fairness modules, including disentanglement loss, adversarial loss, and counterfactual graph generation. These components moderately increase the training cost. Specifically, the training complexity becomes $O(2MDd + 2NDd + 2Nd + M)$ due to the doubled forward pass computation (original and counterfactual graphs) and the fairness regularization. We notice that this does not lead to a change in the order of complexity since it still holds that $O(2MDd + 2NDd + 2Nd + M) = O((M+N)Dd)$.
In practice, we observed less than 2x increase in training time and a modest rise in GPU memory.

\section{Experiments}

\subsection{Experiment settings}
\paragraph{Datasets}

We start with datasets designed for benchmarking fair graph learning methods and create datasets for fair GAD by injecting outliers~\cite{22Pygod,huang2023unsupervised} to the graph structures of original datasets. These fair graph learning datasets include three real-world datasets\cite{agarwal2021towards} including German, Bail, and Credit, and one synthetic dataset proposed in \cite{22GEAR}. We follow~\cite{Liu2022BONDBU} and maintain an outlier ratio of 5\% in all experiments. The real-world datasets are described as follows:

\begin{itemize}
    \item German: This dataset represents clients, where each node corresponds to an individual, and edges indicate high similarity in credit account information. The objective is to classify credit risk as either high or low, with gender serving as the sensitive attribute.
    \item Bail: This dataset represents defendants who was released on bail in U.S. state courts between 1990 and 2009. Edges are formed based on similarity in past criminal records and demographics. The classification task is to predict whether a defendant should be granted bail (i.e., unlikely to commit a violent crime if released) or denied bail, with race as the sensitive attribute. 
    \item Credit: This dataset represents credit card users, where nodes represent individuals and edges are established based on similarities in payment behavior. The classification task involves predicting the default payment status, with age as the sensitive attribute.
\end{itemize}

The synthetic dataset is generated using a simple random process, which contains the ground-truth counterfactuals for evaluating counterfactual fairness. On the other hand, the real-world datasets only facilitates evalution of fairness metrics such as equal opportunity (EOO) and demographic parity (DP).
Specifically, in the synthetic dataset \cite{22GEAR}, the graph features are generated as follows:
\[
S^n \sim \text{Bernoulli}(p), \quad \vz^n \sim \mathcal{N}(\vzero, \mI), \quad \vx^n = \operatorname{sample}(\vz^n) + S^n \mathbf{v}, 
\]
where $ p $ is taken to  be $0.4$ following~\cite{22GEAR}, $ \vz^n \in \mathbb{R}^{50} $ is the a latent embedding, and $ \operatorname{sample}(\cdot) $ denotes a sampling operation which randomly selects $ 25 $ dimensions out of the latent embeddings to form the observed features $ \vx^n $ for the $n$-th node.
The graph structure is constructed by assigning the following probability for connecting nodes $n$, $m$: \[
\sP (A_{nm} = 1) = \sigma(\cos(\vz^n, \vz^m) + a \cdot \vone(S^n = S^m)),\]
where $ a = 0.01 $, $\sigma$ is the sigmoid function, and $ \vone(\cdot) $ is an indicator function which outputs 1 when the input statement is true and 0 otherwise.
Finally, the label for the $n$-th node is given by a binary quantization of $\mathbf{w}^\T \vz^n + {\sum_{m \in \operatorname{nei}(n)} S^m}/{(2|\operatorname{nei}(n)|)}$, where $\operatorname{nei}(n)$ denotes the set of nodes adjacent to the $n$-th node.

The statistics of both synthetic and real datasets are summarized in Table~\ref{tab:stats}. To inject the structural outliers, we follow \cite{22Pygod} to create multiple groups of anomalous nodes, each of which contains a fixed number of nodes. For each group, we begin by fully connecting all nodes within the group, forming a dense subgraph. To introduce structural anomalies, we then randomly remove edges, where each edge is dropped with a certain probability. We keep an outlier ratio of 5\% in all the experiments. 

\begin{table}[t]
\centering
\caption{Statistics of the graph datasets.}
\label{tab:stats}
\begin{tabular}{lccccc}
\toprule
Dataset & \# Nodes & \# Edges & \# Attributes & Sens.  \\
\midrule
Synthetic& 2000 & 5090 & 50 & S \\
German & 1,000 & 22,242 & 27 & gender  \\
Bail & 18,876 & 321,308 & 18 & race  \\
Credit  & 30,000 & 1,436,858 & 13 & age  \\
\bottomrule
\end{tabular}
\end{table}

\begin{table*}[t] 
\centering
\caption{Performance comparison on real-world datasets with structural outliers. $\uparrow$ denotes that higher values indicate better performance, $\downarrow$ denotes that lower values indicate better performance. The better results between baseline and DECAF-GAD for each scenario are highlighted in \textbf{bold}. GADNR showed numerical instability for the Credit dataset and is not reported.}
\label{tab:main_results_real_main}
\vskip 1em
\begin{tabular}{llccccc}
\toprule
Dataset & Method & Accuracy (\%) $\uparrow$ & F1-score (\%) $\uparrow$ & AUROC (\%) $\uparrow$ & $\Delta_{\text{EOO}}$ (\%) $\downarrow$ & $\Delta_{\text{DP}}$ (\%) $\downarrow$ \\
\midrule
\multirow{6}{*}{Bail} 
 & DOMINANT        & \textbf{91.90 $\pm$ 0.04} & \textbf{19.04 $\pm$ 0.43} & \textbf{57.38 $\pm$ 0.23} & 2.64 $\pm$ 0.85 & 0.84 $\pm$ 0.05 \\
 & DECAF-DOMINANT  & 91.49 $\pm$ 0.35 & 14.95 $\pm$ 3.54 & 55.23 $\pm$ 1.86 & \textbf{2.41 $\pm$ 1.33} & \textbf{0.61 $\pm$ 0.30} \\
 \cmidrule(l){2-7}
 & DONE            & 91.12 $\pm$ 0.06 & 11.23 $\pm$ 0.59 & 53.28 $\pm$ 0.31 & 1.09 $\pm$ 0.94 & 0.79 $\pm$ 0.34 \\
 & DECAF-DONE      & \textbf{91.32 $\pm$ 0.08} & \textbf{13.28 $\pm$ 0.79} & \textbf{54.35 $\pm$ 0.41} & \textbf{0.95 $\pm$ 0.86} & \textbf{0.63 $\pm$ 0.33} \\
 \cmidrule(l){2-7}
 & GADNR           & 92.11 $\pm$ 0.49 & \textbf{21.16 $\pm$ 4.91} & \textbf{58.50 $\pm$ 2.58} & 5.76 $\pm$ 2.21 & 0.75 $\pm$ 0.35 \\
 & DECAF-GADNR     & \textbf{93.57 $\pm$ 1.50} & 10.71 $\pm$ 11.65 & 54.32 $\pm$ 4.94 & \textbf{2.58 $\pm$ 2.99} & \textbf{0.41 $\pm$ 0.45} \\
\midrule
\multirow{6}{*}{German} 
 & DOMINANT        & \textbf{88.74 $\pm$ 0.38} & \textbf{19.57 $\pm$ 2.71} & \textbf{55.62 $\pm$ 1.16} & 12.07 $\pm$ 2.89 & \textbf{1.96 $\pm$ 0.66} \\
 & DECAF-DOMINANT  & 87.64 $\pm$ 0.69 & 11.71 $\pm$ 4.94 & 51.34 $\pm$ 1.34 & \textbf{3.70 $\pm$ 2.66} & \textbf{1.96 $\pm$ 0.30} \\
 \cmidrule(l){2-7}
 & DONE            & \textbf{87.40 $\pm$ 0.00}& \textbf{10.00 $\pm$ 0.00} & \textbf{51.53 $\pm$ 0.00} & 2.07 $\pm$ 0.00 & \textbf{1.64 $\pm$ 0.00} \\
 & DECAF-DONE      & \textbf{87.40 $\pm$ 0.90} & \textbf{10.00 $\pm$ 0.63} & 51.52 $\pm$ 0.27 & \textbf{2.04 $\pm$ 0.77} & \textbf{1.64 $\pm$ 0.59} \\
 \cmidrule(l){2-7}
 & GADNR           & 88.64 $\pm$ 1.53 & 18.86 $\pm$ 10.91 & 55.31 $\pm$ 4.66 & \textbf{11.20 $\pm$ 6.66} & \textbf{1.64 $\pm$ 0.63} \\
 & DECAF-GADNR     & \textbf{90.00 $\pm$ 0.00} & \textbf{28.57 $\pm$ 0.00} & \textbf{59.46 $\pm$ 0.00} & 15.95 $\pm$ 0.00 & \textbf{1.64 $\pm$ 0.00} \\
\midrule
\multirow{4}{*}{Credit} 
 & DOMINANT        & \textbf{92.17 $\pm$ 0.11} & \textbf{22.64 $\pm$ 1.13} & \textbf{59.15 $\pm$ 0.59} & 12.75 $\pm$ 0.94 & 3.04 $\pm$ 0.75 \\
 & DECAF-DOMINANT  & 91.03 $\pm$ 0.68 & 11.33 $\pm$ 6.70 & 53.27 $\pm$ 3.49 & \textbf{4.90 $\pm$ 3.69} & \textbf{1.89 $\pm$ 0.90} \\
 \cmidrule(l){2-7}
 & DONE            & \textbf{97.30 $\pm$ 0.11} & \textbf{73.33 $\pm$ 1.07} & \textbf{85.56 $\pm$ 0.56} & 28.88 $\pm$ 1.23 & 2.68 $\pm$ 0.26 \\
 & DECAF-DONE      & 94.04 $\pm$ 0.24 & 41.11 $\pm$ 2.40 & 68.78 $\pm$ 1.25 & \textbf{15.11 $\pm$ 2.84} & \textbf{2.06 $\pm$ 0.22} \\
\bottomrule
\end{tabular}
\vskip 1em
\end{table*}

\begin{table*}[t]
\centering
\caption{Performance comparison on the synthetic dataset with structural outliers. $\uparrow$ denotes that higher values indicate better performance, $\downarrow$ denotes that lower values indicate better performance. The better results between baseline and DECAF-GAD for each scenario are highlighted in \textbf{bold}.}
\label{tab:main_results_synt}
\vskip 1em
\begin{tabular}{lcccccc}
\toprule
Method & Accuracy (\%) $\uparrow$ & F1-score (\%) $\uparrow$ & AUROC (\%) $\uparrow$ & $\Delta_{\text{EOO}}$ (\%) $\downarrow$ & $\Delta_{\text{DP}}$ (\%) $\downarrow$ & $\Delta_{\text{CF}}$ (\%) $\downarrow$ \\
\midrule
 DOMINANT & 92.36 $\pm$ 0.28 & 24.36 $\pm$ 2.74 & 60.07 $\pm$ 1.43 & 8.00 $\pm$ 8.22 & 5.89 $\pm$ 2.29 & 7.32 $\pm$ 0.35 \\
  DECAF-DOMINANT   & \textbf{92.80 $\pm$ 0.12} & \textbf{28.71 $\pm$ 1.17} & \textbf{62.35 $\pm$ 0.61} & \textbf{3.67 $\pm$ 1.63} & \textbf{4.75 $\pm$ 0.64} & \textbf{6.71 $\pm$ 0.18} \\
\midrule

  DONE & \textbf{96.62 $\pm$ 0.35} & \textbf{66.53 $\pm$ 3.42} & \textbf{82.08 $\pm$ 1.79} & 5.16 $\pm$ 5.55 & 0.63 $\pm$ 0.29 & 4.09 $\pm$ 0.33 \\
  DECAF-DONE   & 96.49 $\pm$ 0.50 & 65.25 $\pm$ 4.92 & 81.41 $\pm$ 2.57 & \textbf{3.88 $\pm$ 3.98} & \textbf{0.58 $\pm$ 0.34} & \textbf{3.41 $\pm$ 0.50} \\
\midrule

 GADNR & 93.81 $\pm$ 1.93 & 38.71 $\pm$ 19.14 & 67.56 $\pm$ 9.99 & 15.52 $\pm$ 7.53 & 3.73 $\pm$ 2.17 & 4.69 $\pm$ 3.33 \\
 DECAF-GADNR   & \textbf{95.40 $\pm$ 0.33} & \textbf{54.46 $\pm$ 3.31} & \textbf{75.78 $\pm$ 1.73} & \textbf{15.46 $\pm$ 2.95} & \textbf{2.41 $\pm$ 0.39} & \textbf{0.57 $\pm$ 0.42} \\
\bottomrule
\end{tabular}
\vskip 1em
\end{table*}

\paragraph{Baseline and experiment setup} To demonstrate the effectiveness and flexibility of DECAF-GAD, we integrate it with three popular autoencoder-based baseline GAD methods: DOMINANT~\cite{19DOMINANT}, DONE~\cite{adone}, and GADNR~\cite{24GADNR}. We refer to the integrated models as DECAF-DOMINANT, DECAF-DONE, and DECAF-GADNR, respectively. We evaluate each DECAF-GAD model against its original baseline to assess the effectiveness of our debiasing approach. All experiments are conducted $10$ times randomly, and we report the average performance metrics along with standard errors to ensure statistical reliability.

\paragraph{Implementation Details} 

We implement the baseline methods using the PyGOD library \cite{22Pygod} where recommended settings are used for hyperparameters including hidden dimensions and number of layers, while optimizing learning rate, dropout rate, and weight decay using Optuna \cite{19optuna}. The search ranges are as follows: learning rate \( [1e^{-5}, 1e^{-1}] \), dropout rate \( [1e^{-6}, 1] \), and weight decay \( [1e^{-6}, 1] \), all using a logarithmic search space. We also use Optuna to search for the hyperparameters for \( \lambda_1, \lambda_2, \lambda_3, \lambda_4 \) in DECAF-GAD. All experiments are conducted on a Linux server with GeForce RTX 4090 GPUs (24 GB RAM), where each experiment uses a single GPU.

\paragraph{Evaluation metric} To assess anomaly detection accuracy, we adopt commonly used metrics including classification accuracy, F1-score, and AUC-ROC score. To assess fairness, we adopt commonly used metrics including Equal of Opportunity (EOO) and Demographic Parity (DP). In addition, since ground-truth counterfactual data is  available in the synthetic dataset, we also report the counterfactual fairness (CF) metric for the synthetic data. Specifically, these metrics are given by
\begin{align*}
    \Delta_\text{EOO} &= \left| \sP(\hat{Y} = 1 \mid Y = y, S = 0) - \sP(\hat{Y} = 1 \mid Y = y, S = 1) \right|, \\
    \Delta_\text{DP} &= \left| \sP(\hat{Y} = 1 \mid S = 0) - \sP(\hat{Y} = 1 \mid S = 1) \right|, \\
    \Delta_\text{CF} &= \left| \sP(\hat{Y}=1 \mid {S \leftarrow 0}) - \sP(\hat{Y}=1 \mid {S \leftarrow 1}) \right|,
\end{align*}
where $\leftarrow$ stands for the ``do'' operation in counterfactual analysis. A smaller value in these metrics indicates better fairness.

\subsection{Results}

Tables \ref{tab:main_results_real_main} and \ref{tab:main_results_synt} summarize the performance of our DECAF-GAD framework compared to baseline methods on real-world and synthetic datasets, respectively. We remark that GADNR showed numerical instability for the Credit dataset and thus it is not reported. For a clear comparison, we also present bar plots for comparing the baseline methods and the DECAF versions for the synthetic data.

For the three real-world datasets, DECAF-GAD consistently outperforms the baseline methods in terms of fairness metrics $\Delta_{\text{EOO}}$ and $\Delta_{\text{DP}}$. As a tradeoff for improved fairness, the accuracy is slightly reduced in some scenarios; however, some scenarios still see an improvement in accuracy. 

For the synthetic dataset, DECAF-GAD demonstrates consistently superior performance across all fairness metrics including not only $\Delta_{\text{EOO}}$ and $\Delta_{\text{DP}}$, but also $\Delta_{\text{CF}}$. More encouragingly, it also achieves similar or even better accuracies in all scenarios. This superior performance can be attributed to the fact that the design of our framework has a clear causal structure. 

In addition to the structural perturbation setting, we also consider injecting the contextual outliers on node features. 

\paragraph{Ablation study}
To assess the effectiveness of the adversarial learning and counterfactual loss in the design of our method, we perform an ablation study on the Bail dataset. 

Figure \ref{fig:ablation2} presents the results for the full model, the model without adversarial learning, the model without counterfactual regularization, and the model with both components removed. The results clearly indicate that the combination of adversarial loss and counterfactual regularization substantially enhances the fairness performance of the model.

\begin{figure}[t]
    \centering
        \includegraphics[width=\linewidth]{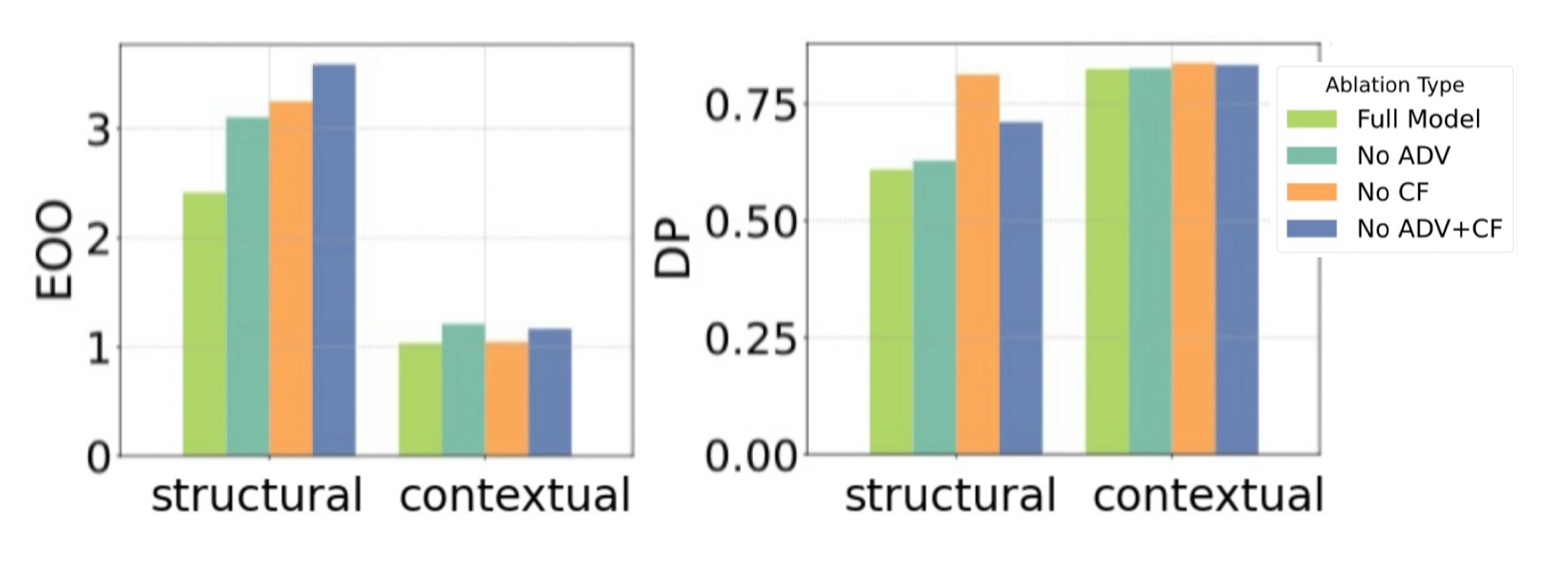}
        \vskip 1em
        \caption{Ablation study of adversarial learning and counterfactual fairness components of our method. We test full model versus removing these components and report the $\Delta_{\text{EOO}}$ and $\Delta_{\text{DP}}$ metrics on the Bail dataset.}
        \vskip 2em
        \label{fig:ablation2}
\end{figure}

\paragraph{Sensitivity analysis}
We conduct a sensitivity analysis on the hyperparameters $\lambda_1$, $\lambda_2$ and $\lambda_3$. Specifically, we examine their impact on the counterfactual fairness metric  $\Delta_\text{CF}$ for the synthetic dataset using the DECAF-DOMINANT method. Figure \ref{fig:sensitivity} presents the results. Each subfigure shows the heatmap of the $\Delta_\text{CF}$ score as two of the three hyperparameters vary. Our analysis reveals that the performance of our model remains relatively stable, with the range of $\Delta_\text{CF}$ values across all subfigures spanning less than 2\%, which indicates robustness to hyperparameter selection.


\begin{figure*}[t] 
    \centering
    \begin{subfigure}{0.3\linewidth} 
        \centering
        \includegraphics[width=\linewidth]{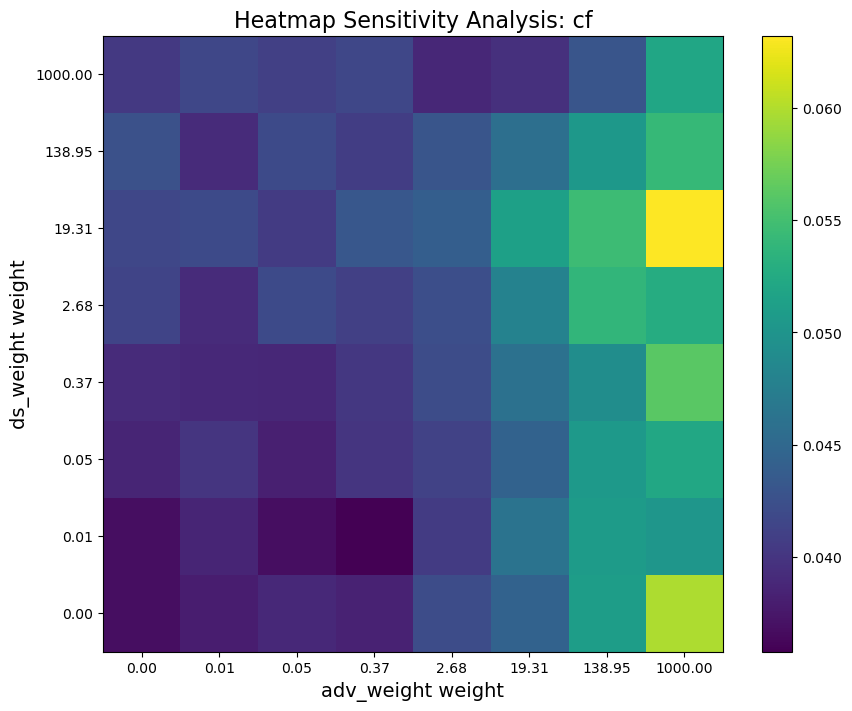}
        \caption{$\lambda_1$ vs. $\lambda_2$}
    \end{subfigure}
    \hfill
    \begin{subfigure}{0.3\linewidth} 
        \centering
        \includegraphics[width=\linewidth]{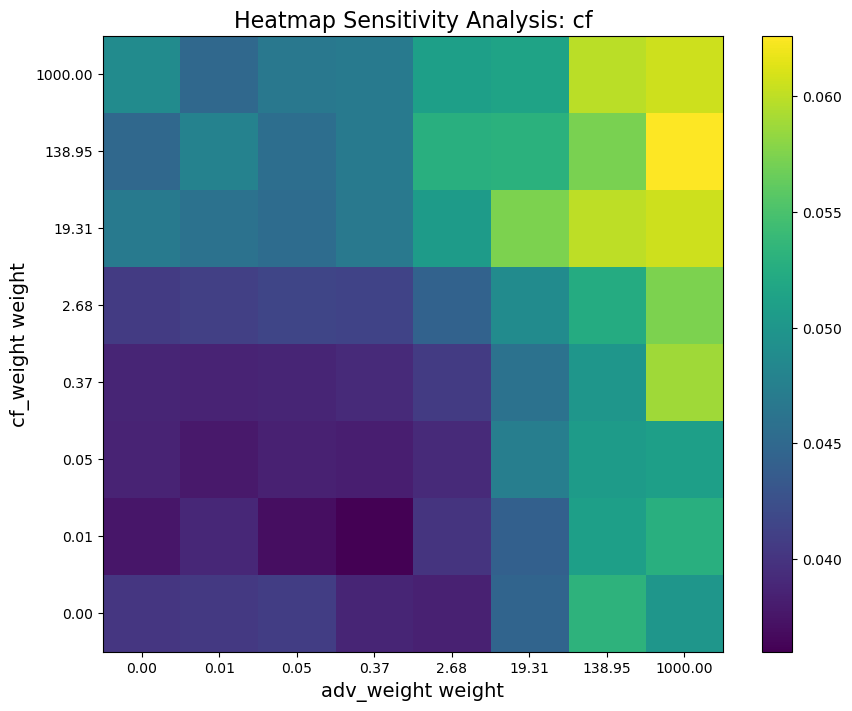} 
        \caption{$\lambda_2$ vs. $\lambda_3$}
    \end{subfigure}
    \hfill
    \begin{subfigure}{0.3\linewidth} 
        \centering
        \includegraphics[width=\linewidth]{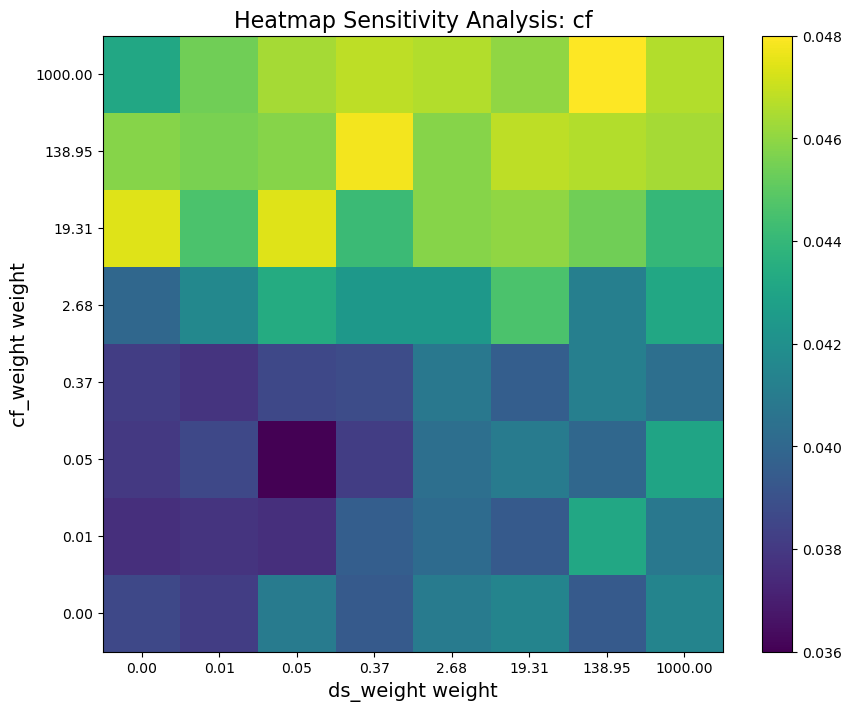} 
        \caption{$\lambda_1$ vs. $\lambda_3$}
    \end{subfigure}
    \vskip 1em
    \caption{Sensitivity analysis for $\Delta_\text{CF}$ scores.}
    \vskip 2em
    \label{fig:sensitivity}
    
\end{figure*}

\paragraph{Correlation study}

\begin{figure}[t]
    \centering
    \includegraphics[width=.6\linewidth]{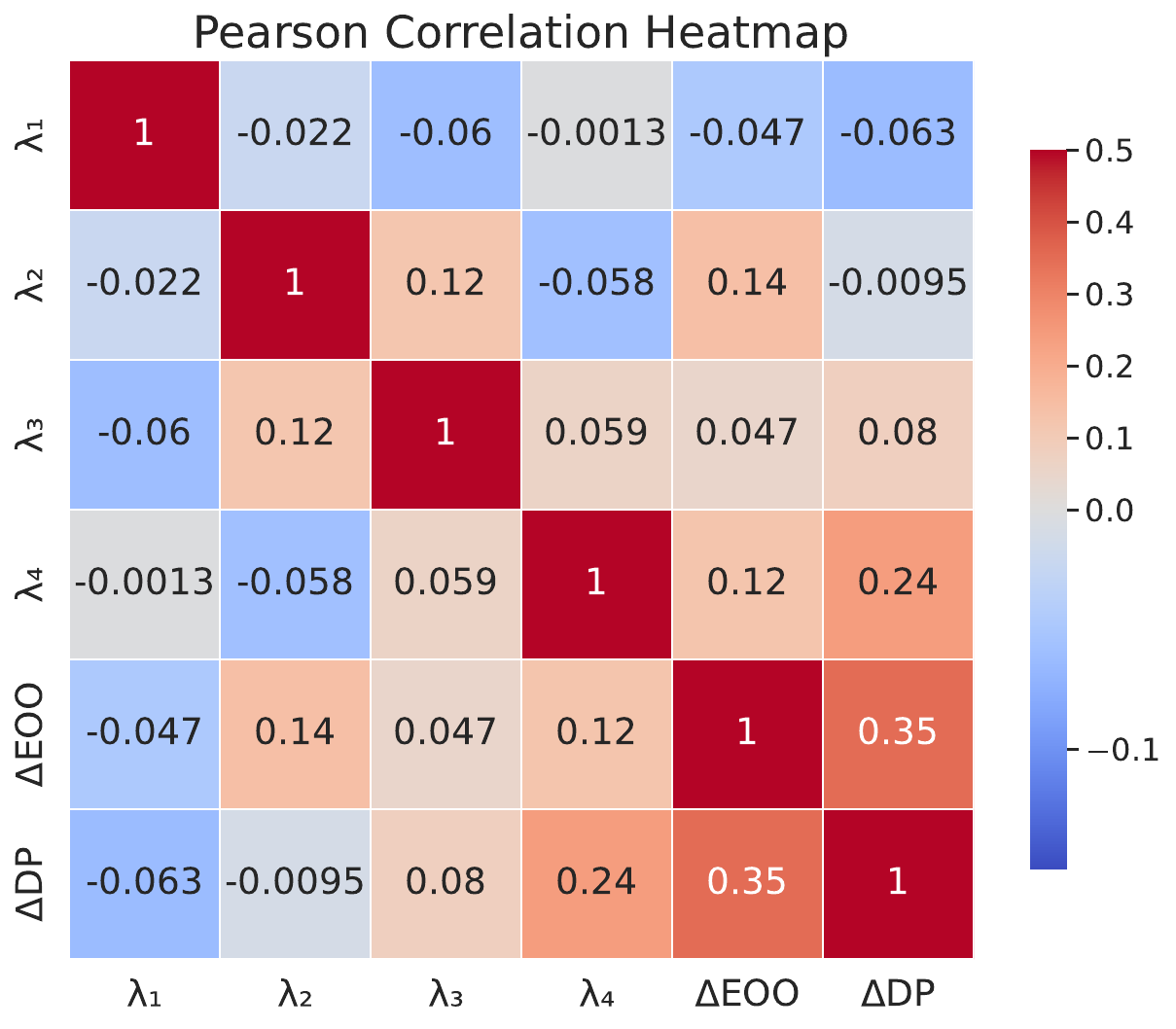}
    \vskip 1em
    \caption{Correlation between $\lambda_1$, $\lambda_2$, $\lambda_3$, $\lambda_4$, $\Delta_{\text{EOO}}$ and $\Delta_{\text{DP}}$ on the Bail dataset for DECAF-DOMINANT.}
    \vskip 2em
    \label{fig:correlation}
\end{figure}

In this section, we analyze the correlation between: hyperparameters $\lambda_1$, $\lambda_2$, $\lambda_3$, $\lambda_4$; and fairness metrics including DP and EOO. We run 100 trials on the Bail dataset using DECAF-DOMINANT in order to explore these correlations.

The Pearson correlation heatmap is presented in Figure~\ref{fig:correlation}. Clearly, $\lambda_2$, $\lambda_3$ and $\lambda_4$ generally have positive correlations with both DP and EOO. This indicates that the disentanglement design, adversarial learning strategy, and counterfactual fairness constraint all contribute to improved fairness. 
On the other hand, $\lambda_1$ exhibits negative correlations with both DP and EOO, indicating that prioritizing reconstruction can adversely impact fairness. This suggests a trade-off between fairness and accuracy.

\paragraph{Visualization analysis}
In Figure \ref{fig:tsne_z}, we present the two-dimensional t-SNE visualizations \cite{van2008visualizing} for the four distinct embeddings: $\mathbf{Z}_\text{c}$, $\mathbf{Z}_\text{e}$, $\mathbf{Z}_\text{c}^{\mathrm{cf}}$, and $\mathbf{Z}_\text{e}^{\mathrm{cf}}$. These are latent representations from the autoencoder of DECAF-DOMINANT. 

The resulting clusters show minimal overlap between original and counterfactual representations, which indicates that sensitive attributes are successfully ``flipped'' in the latent space. Moreover, the content components are well-separated from the environment components, which indicates that the strategy of disentanglement is successfully executed.

In Figure~\ref{fig:tsne_sy}, we show t-SNE presentation of the concatenation of $\mathbf{Z}_\text{c}$ and $\mathbf{Z}_\text{e}$. We color the data points according to both the sensitive attributes and the labels. We observe that there is no clear clustering according to the sensitive labels, which intuitively indicates fairness of our method. However, there is also no clear clustering according to the labels, which showcases the tradeoff between fairness and GAD performance.

\section{Conclusion and Future Work}
In this work, we introduced DECAF-GAD, a novel framework designed to address the underexplored issue of fair GAD. By analyzing an SCM, our method integrates disentanglement, adversarial learning, and counterfactual regularization, into existing autoencoder-based GAD methods. Our extensive experiments on both real-world and synthetic datasets demonstrate that DECAF-GAD consistently improves fairness metrics across various baseline GAD models, showcasing the effectiveness, robustness, and adaptability of our framework. These results highlight the potential of causal modeling as well as disentanglement techniques in improving fairness in GAD problems.
While DECAF-GAD is designed to integrate seamlessly with existing autoencoder-based GAD models , its core principles can be extended to other GAD approaches.

Future work includes the following directions. First, our framework currently focuses on autoencoder-based GAD; extending it to other anomaly detection approaches (e.g., generative modeling-based approaches) could enhance its applicability. Second, while our counterfactual fairness formulation relies on a fixed SCM that we propose, it is possible to consider learning causal graphs so that the SCM adapts to more complex real-world scenarios. Moreover, an important direction for future work is to expand fair GAD beyond group fairness metrics and counterfactual fairness metrics and explore the different setting of individual fairness in GAD. Finally, we could consider other types of graph data, such as dynamic graphs that model evolving real-world systems.

\begin{figure}[t]
    \centering
    \includegraphics[width=.7\linewidth]{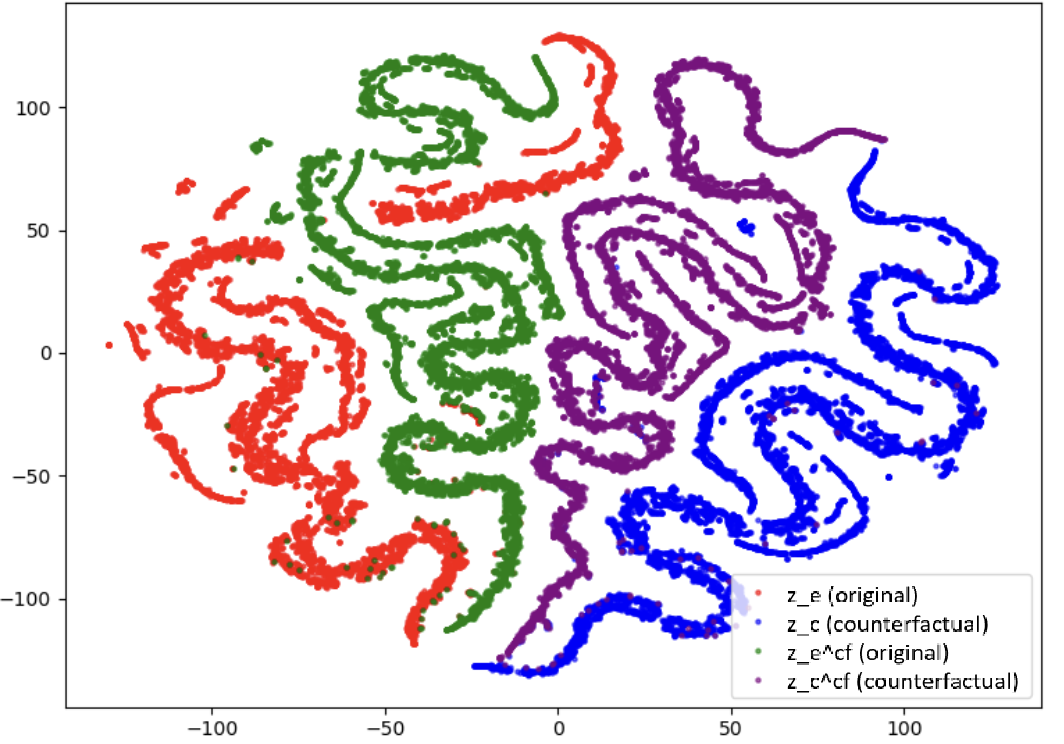}
    \vskip 1em
    \caption{t-SNE visualization of disentangled latent representations. The colors indicate four different latent variables.}
    \vskip 2em
    \label{fig:tsne_z}
\end{figure}

\begin{figure}[t]
    \centering
    \includegraphics[width=.7\linewidth]{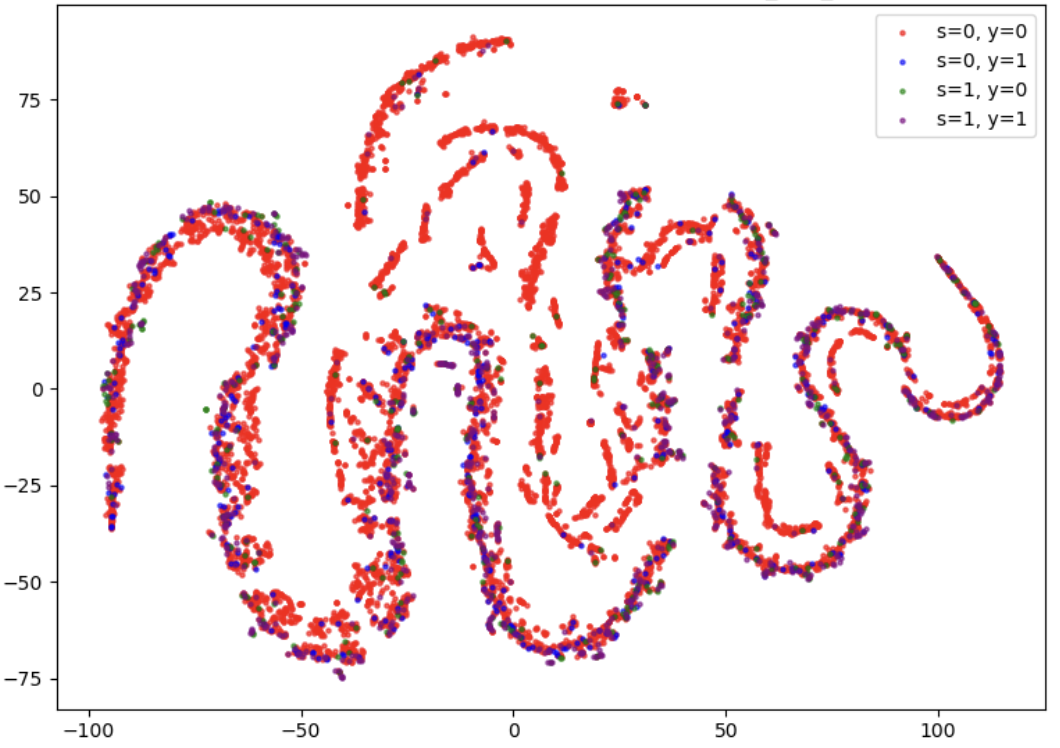}
    \vskip 1em
    \caption{t-SNE visualization of concatenated latent representations. The colors indicate different sensitive attributes and different labels.}
    \vskip 2em
    \label{fig:tsne_sy}
\end{figure}



\begin{ack}
Part of this work was conducted while Shouju Wang was visiting Duke Kunshan University as an exchange student. Yuchen Song's contribution was partially supported by the Duke Kunshan University Summer Research Scholar Program. We also thank the anonymous reviewers for their insightful comments and constructive feedback. 
\end{ack}



\bibliography{mybibfile}

\appendix

\newpage

\onecolumn

\section*{Appendix}

\section{Additional Experimental Results}
\subsection{Additional Injection Methods}\label{app:additional_exp}
In this section, we report the performance of DECAF-GAD on the same datasets, but with different outlier injection methods taken from \cite{22Pygod} than reported in the main text. We refer to the outlier injection method described in the main text as structural outlier. Another type of outlier is contextual outlier, where the features of selected nodes are replaced with features from farthest nodes.

\begin{table}[h]
\centering
\vskip 1em
\caption{Performance comparison on real-world datasets with contextual outliers. $\uparrow$ denotes that higher values indicate better performance, $\downarrow$ denotes that lower values indicate better performance. The better results between baseline and DECAF-GAD for each scenario are highlighted in \textbf{bold}. GADNR showed numerical instability for the Credit dataset and is not reported.}
\label{tab:main_results_real_ctxt}
\vskip 1em
\begin{tabular}{llccccc}
\toprule
Dataset & Method & Accuracy (\%) $\uparrow$ & F1-score (\%) $\uparrow$ & AUROC (\%) $\uparrow$ & $\Delta_{\text{EOO}}$ (\%) $\downarrow$ & $\Delta_{\text{DP}}$ (\%) $\downarrow$ \\
\midrule
\multirow{6}{*}{Bail} 
 & DOMINANT        & \textbf{93.70 $\pm$ 0.03} & \textbf{37.00 $\pm$ 0.33} & \textbf{66.84 $\pm$ 0.17} & 1.03 $\pm$ 0.61   & \textbf{0.75 $\pm$ 0.04} \\
 & DECAF-DOMINANT  & 93.66 $\pm$ 0.02 & 36.62 $\pm$ 0.23 & 66.64 $\pm$ 0.12 & \textbf{0.46 $\pm$ 0.30} & 0.82 $\pm$ 0.05 \\
 \cmidrule(l){2-7}
 & DONE            & \textbf{95.51 $\pm$ 0.29} & \textbf{55.06 $\pm$ 2.94} & \textbf{76.35 $\pm$ 1.55} & 3.87 $\pm$ 1.61 & \textbf{0.53 $\pm$ 0.11} \\
 & DECAF-DONE      & 93.91 $\pm$ 0.11 & 39.11 $\pm$ 1.06 & 67.95 $\pm$ 0.56 & \textbf{2.18 $\pm$ 1.34} & \textbf{0.53 $\pm$ 0.32} \\
 \cmidrule(l){2-7}
 & GADNR           & \textbf{93.42 $\pm$ 0.18} & \textbf{34.22 $\pm$ 1.83} & \textbf{65.38 $\pm$ 0.96} & 5.07 $\pm$ 3.88 & \textbf{0.37 $\pm$ 0.24} \\
 & DECAF-GADNR     & 92.50 $\pm$ 0.02 & 25.02 $\pm$ 0.19 & 60.54 $\pm$ 0.10 & \textbf{1.06 $\pm$ 0.23} & 0.66 $\pm$ 0.02 \\
\midrule
\multirow{6}{*}{German} 
 & DOMINANT        & 95.20 $\pm$ 0.00 & 52.00 $\pm$ 0.00 & 74.74 $\pm$ 0.00 & 2.94 $\pm$ 0.00 & \textbf{1.19 $\pm$ 0.00} \\
 & DECAF-DOMINANT  & \textbf{95.76 $\pm$ 0.08} & \textbf{57.60 $\pm$ 0.80} & \textbf{77.68 $\pm$ 0.42} & \textbf{2.13 $\pm$ 0.88} & \textbf{1.19 $\pm$ 0.00} \\
 \cmidrule(l){2-7}
 & DONE            & \textbf{96.28 $\pm$ 0.16} & \textbf{62.80 $\pm$ 1.60} & \textbf{80.42 $\pm$ 0.84} & 2.21 $\pm$ 0.93 & 0.35 $\pm$ 0.19 \\
 & DECAF-DONE      & 96.20 $\pm$ 0.00 & 62.00 $\pm$ 0.00 & 80.00 $\pm$ 0.00 & \textbf{0.74 $\pm$ 0.00} & \textbf{0.30 $\pm$ 0.14} \\
 \cmidrule(l){2-7}
 & GADNR           & 91.18 $\pm$ 1.03 & 11.80 $\pm$ 10.33 & 53.58 $\pm$ 5.44 & 2.54 $\pm$ 3.60 &\textbf{ 0.75 $\pm$ 0.45} \\
 & DECAF-GADNR     & \textbf{92.40 $\pm$ 0.00} & \textbf{24.00 $\pm$ 0.00} & \textbf{60.00 $\pm$ 0.00} & \textbf{1.47 $\pm$ 0.00} & 1.19 $\pm$ 0.00 \\
\midrule
\multirow{6}{*}{Credit} 
 & DOMINANT        & \textbf{94.53 $\pm$ 0.09} & \textbf{45.33 $\pm$ 0.87} & \textbf{71.23 $\pm$ 0.46} & 11.37 $\pm$ 1.05 & 6.53 $\pm$ 0.53 \\
 & DECAF-DOMINANT  & 91.52 $\pm$ 0.43 & 15.14 $\pm$ 4.34 & 55.34 $\pm$ 2.29 & \textbf{7.48 $\pm$ 5.02} & \textbf{4.67 $\pm$ 2.87} \\
 \cmidrule(l){2-7}
 & DONE            & \textbf{99.89 $\pm$ 0.01} & \textbf{98.89 $\pm$ 0.15} & \textbf{99.41 $\pm$ 0.08} & \textbf{1.25
 $\pm$ 1.03 }& 4.79 $\pm$ 0.12 \\
 & DECAF-DONE      & 90.99  $\pm$ 0.1 & 9.88 $\pm$ 1.0 & 52.57 $\pm$ 0.53 & 3.84 $\pm$ 1.49 & \textbf{4.13   $\pm$ 0.21} \\
\bottomrule
\end{tabular}
\end{table}

\begin{table}[h]
\centering
\caption{Performance comparison on the synthetic dataset with contextual outliers. $\uparrow$ denotes that higher values indicate better performance, $\downarrow$ denotes that lower values indicate better performance. The better results between baseline and DECAF-GAD for each scenario are highlighted in \textbf{bold}.}
\label{tab:main_results_synt_ctxt}
\begin{tabular}{lcccccc}
\hline
Method & Accuracy (\%) $\uparrow$ & F1-score (\%) $\uparrow$ & AUROC (\%) $\uparrow$ & $\Delta_{\text{EOO}}$ (\%) $\downarrow$ & $\Delta_{\text{DP}}$ (\%) $\downarrow$ & $\Delta_{\text{CF}}$ (\%) $\downarrow$ \\
\hline
DOMINANT & \textbf{96.00 $\pm$ 0.25} & \textbf{60.00 $\pm$ 2.53} & \textbf{78.95 $\pm$ 1.33} & 8.11 $\pm$ 3.76 & 3.70 $\pm$ 0.83 & \textbf{4.17 $\pm$ 0.33} \\
DECAF-DOMINANT & 93.92 $\pm$ 0.32 & 39.22 $\pm$ 3.22 & 68.01 $\pm$ 1.69 & \textbf{8.02 $\pm$ 3.29} & \textbf{2.45 $\pm$ 0.45} & 5.34 $\pm$ 0.24 \\
\hline  
DONE & 93.93 $\pm$ 0.44 & 39.30 $\pm$ 4.38 & 68.05 $\pm$ 2.31 & 8.45 $\pm$ 6.24 & 1.96 $\pm$ 0.57 & 6.50 $\pm$ 0.25 \\
DECAF-DONE & \textbf{95.73 $\pm$ 0.91} & \textbf{57.30 $\pm$ 9.13} & \textbf{77.53 $\pm$ 4.81} & \textbf{7.45 $\pm$ 6.79} & \textbf{0.61 $\pm$ 0.44} & \textbf{4.11 $\pm$ 0.89} \\
\hline

GADNR & 90.80 $\pm$ 0.38 & 8.00 $\pm$ 3.85 & 51.58 $\pm$ 2.02 & 3.23 $\pm$ 2.93 & 4.05 $\pm$ 1.71 & 7.58 $\pm$ 0.86 \\
DECAF-GADNR & \textbf{95.26 $\pm$ 0.17} & \textbf{52.60 $\pm$ 1.74} & \textbf{75.05 $\pm$ 0.92} & \textbf{0.23 $\pm$ 0.12} & \textbf{27.05 $\pm$ 1.11} & \textbf{0.46 $\pm$ 0.19} \\
\hline

\end{tabular}
\end{table}

\subsection{Additional Sensitivity Analysis}\label{app:sensitivity}

\begin{figure}[h] 
    \centering
    \begin{subfigure}{0.33\linewidth} 
        \centering
        \includegraphics[width=\linewidth]{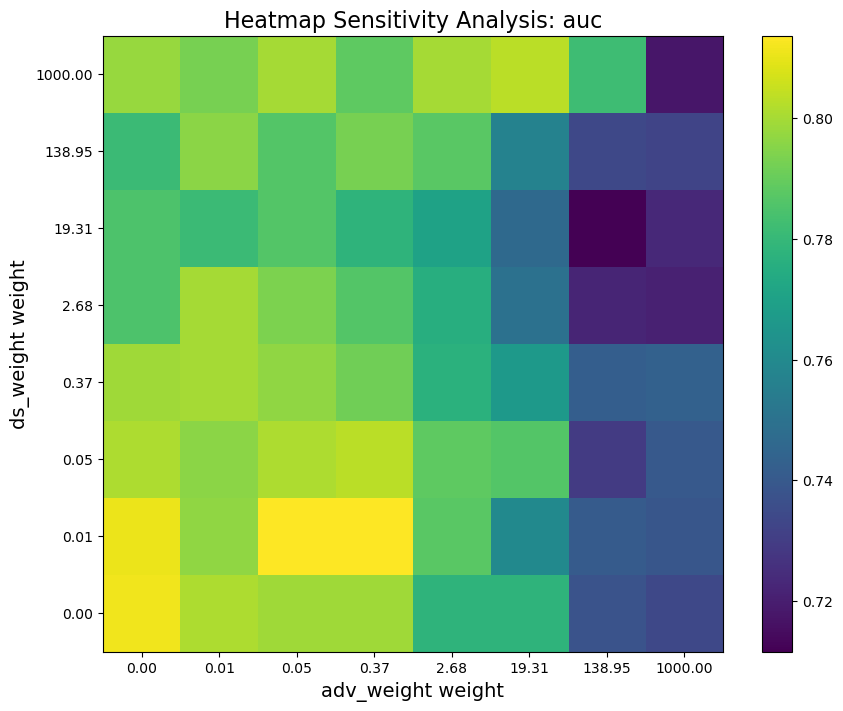} 
        \caption{$\lambda_1$ vs. $\lambda_2$}
    \end{subfigure}
    \hfill
    \begin{subfigure}{0.33\linewidth} 
        \centering
        \includegraphics[width=\linewidth]{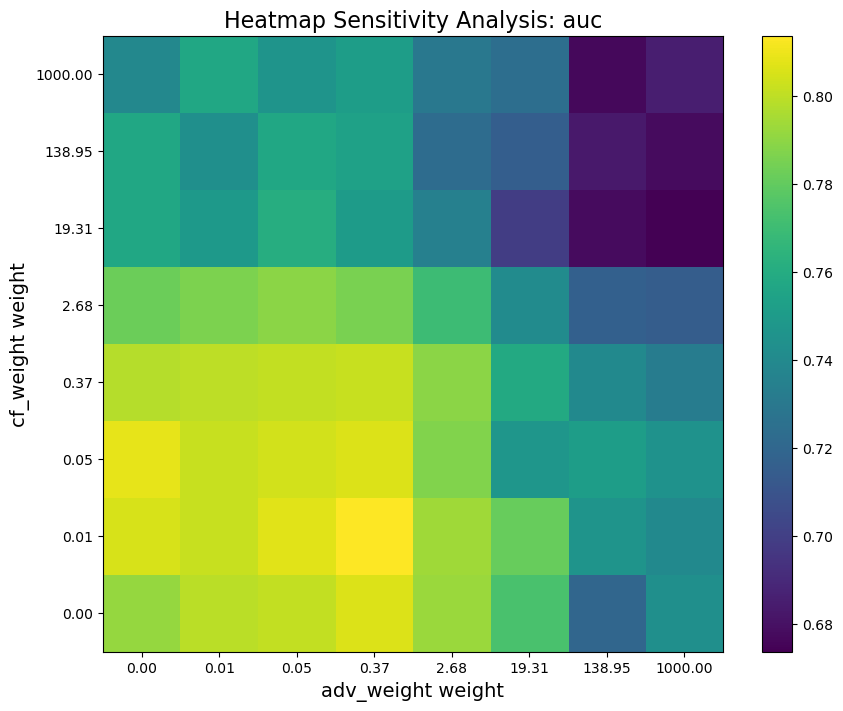} 
        \caption{$\lambda_2$ vs. $\lambda_3$}
    \end{subfigure}
    \hfill
    \begin{subfigure}{0.33\linewidth} 
        \centering
        \includegraphics[width=\linewidth]{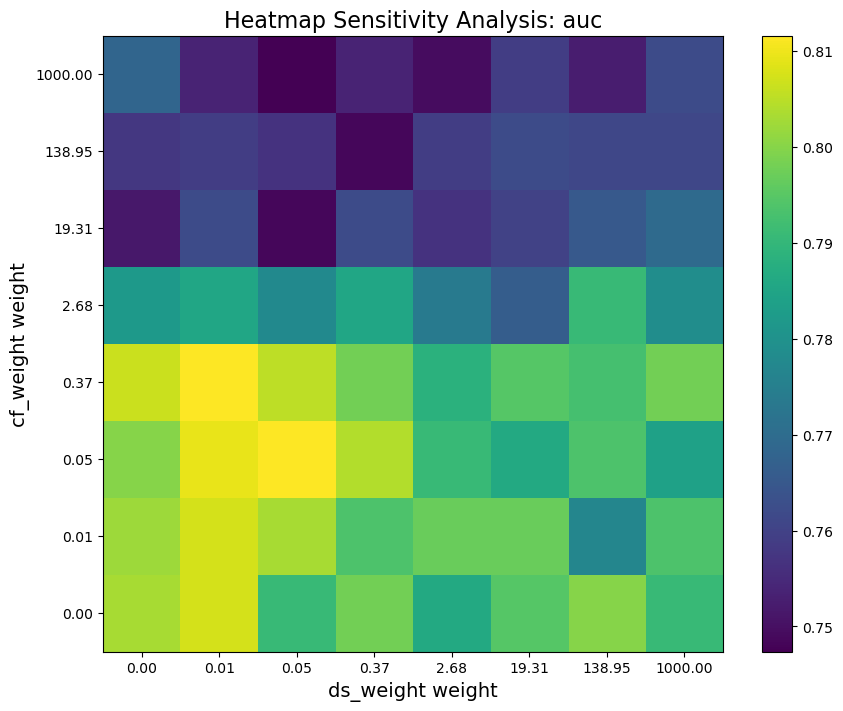} 
        \caption{$\lambda_1$ vs. $\lambda_3$}
    \end{subfigure}
    \vskip 2em
    \caption{Sensitivity analysis for AUC scores.}
    \label{fig:sensitivity_auc}
\end{figure}

We conduct a sensitivity analysis on the hyperparameters $\lambda_1$, $\lambda_2$ and $\lambda_3$ by examining their impact on AUC scores for the synthetic dataset using the DECAF-DOMINANT method. The results are visualized in Figure \ref{fig:sensitivity_auc}. Similar to $\Delta_\text{CF}$ scores, the AUC scores are also stable under the change of the hyperparameters.

\subsection{Loss Contribution Analysis}

Table \ref{tab:loss_contribution} reports the performance change when removing each fairness component from DECAF-DOMINANT on the Bail dataset. Removing either the counterfactual (CF) loss or the adversarial (ADV) loss generally reduces fairness (higher DP/EOO) and in some cases slightly impacts detection accuracy. The baseline (without fairness modules) achieves higher accuracy and AUC but substantially worse fairness.

\begin{table}[h]
\centering
\caption{Ablation study on Bail/DOMINANT showing the contribution of each loss to accuracy and fairness.}
\label{tab:loss_contribution}
\begin{tabular}{lccccc}
\toprule
\textbf{Setting} & \textbf{AUC} & \textbf{ACC} & \textbf{F1} & \textbf{DP} & \textbf{EOO} \\
\midrule
Full model      & 54.65 & 91.38 & 13.85 & 0.66 & 2.35 \\
No CF Loss      & 53.64 & 91.19 & 11.93 & 0.71 & 2.52 \\
No ADV Loss     & 53.86 & 91.23 & 12.35 & 0.83 & 2.59 \\
No CF \& ADV    & 54.60 & 91.37 & 13.74 & 0.78 & 3.34 \\
Baseline        & 57.38 & 91.90 & 19.04 & 0.84 & 2.64 \\
\bottomrule
\end{tabular}
\end{table}

\subsection{Impact of Split Ratio}

Table \ref{tab:split_ratio} examines the effect of different W1 split ratios for DECAF-DOMINANT on Bail. Across 2:8, 4:6, and 5:5 splits, accuracy and AUC remain similar, while fairness metrics (DP, EOO) vary moderately, with the 5:5 split yielding the best fairness balance.

\begin{table}[h]
\centering
\caption{Effect of split ratio on Bail/DOMINANT performance.}
\label{tab:split_ratio}
\begin{tabular}{lcccc}
\toprule
\textbf{Split Ratio} & \textbf{ACC} & \textbf{AUC} & \textbf{EOO} & \textbf{DP} \\
\midrule
2:8 & 91.57 & 55.68 & 3.92 & 0.56 \\
4:6 & 91.47 & 55.18 & 3.14 & 0.68 \\
5:5 & 91.49 & 55.23 & 2.41 & 0.61 \\
\bottomrule
\end{tabular}
\end{table}

\end{document}